\documentclass[conference]{IEEEtran}
\IEEEoverridecommandlockouts

\usepackage{amsmath,amssymb,amsthm}
\usepackage{algorithm,algorithmic}
\usepackage{booktabs}
\usepackage{listings,xcolor}
\usepackage{graphicx}
\usepackage{cite}
\usepackage{url}
\usepackage[hidelinks]{hyperref}

\newtheorem{theorem}{Theorem}

\newtheorem{definition}{Definition}
\newtheorem{proposition}{Proposition}
\newtheorem{corollary}{Corollary}
\newtheorem{remark}{Remark}

\newcommand{\MM}{\mathcal{MM}}
\newcommand{\phCOM}{\textsc{Compute}}
\newcommand{\phREQ}{\textsc{Require}}
\newcommand{\phEXE}{\textsc{Execute}}

\begin{document}

\title{Grokers: Bottom-Up Inductive Comprehension\\
and Write-Time Intelligence over Typed Knowledge Graphs}

\author{
  \IEEEauthorblockN{Gregory Magarshak}
  \IEEEauthorblockA{
    \textit{Qbix, Inc.\ \& Intercoin, Inc.}\\
    \textit{IE University NYC}\\
    New York, NY, USA\\
    gmagarshak@faculty.ienyc.edu
  }
}

\maketitle

\begin{abstract}
We present \textsc{Grokers}, an architecture for building persistent, structured
comprehension of typed knowledge graphs through bottom-up inductive traversal of
dependency subgraphs.
Unlike retrieval-augmented generation (RAG), which pays full comprehension cost at
every query, \textsc{Grokers} pushes intelligence to \emph{write time}: autonomous
Groker agents analyze nodes in a typed stream graph, extract structured attributes
via governed language model (LM) calls, and inductively compose that understanding
upward through dependency relations---writing enriched typed attributes that serve
all future queries at zero additional LM cost.
We prove three formal properties:
(1)~the \emph{Byte-Identity Theorem}, establishing that context blocks assembled
from a transactionally-maintained denormalization index are byte-identical across LM
turns between semantic changes, enabling KV-cache hit rates approaching~100\%;
(2)~the \emph{Accumulation Monotonicity Theorem}, establishing that the fraction of
interactions resolved without LM calls is non-decreasing in the number of completed
interactions under a governed wisdom library growth protocol; and
(3)~the \emph{Dual-Traversal Ordering Theorem}, establishing that top-down generation
and bottom-up comprehension are the unique correct traversal orderings for their
respective tasks over a dependency DAG, and that their composition closes into a
complete generation-comprehension cycle.
We further present a deterministic alternative to embedding-based semantic search,
with a synonym caching protocol whose LM fallback rate converges to zero for
finite-vocabulary domains.
A reference implementation is provided in the open-source Qbix / Safebox /
Safebots stack, building on the Magarshak Machine SPACER substrate~\cite{magarshak2026mm}.
\end{abstract}

\begin{IEEEkeywords}
knowledge graphs, bottom-up comprehension, LLM agents, write-time intelligence,
KV-cache optimization, dependency analysis, semantic search, wisdom library,
Magarshak Machine
\end{IEEEkeywords}

\section{Introduction}
\label{sec:intro}

The dominant paradigm for grounding language model (LM) responses in structured
knowledge is \emph{retrieval at query time}: embed the query, find nearest-neighbor
document chunks by vector similarity, inject them into the prompt
context~\cite{lewis2020rag}.
This approach has a structural deficiency: it pays the full cost of comprehension on
every query, regardless of whether that query is novel or structurally equivalent
to thousands of prior queries.
For knowledge domains with recurring interaction patterns---enterprise software,
document management, structured task execution---this is architecturally wasteful.

We present \textsc{Grokers}, an architecture that inverts this design choice.
Comprehension work is performed \emph{once}, at write time, by autonomous agents
that traverse the dependency graph of a typed knowledge substrate bottom-up---from
leaf nodes toward roots---extracting and storing structured typed attributes via
governed LM calls.
By query time, the relevant structure is already present as typed attributes on
graph nodes.

The core substrate is the \emph{typed stream graph}: a directed graph of typed
nodes with structured fields, typed attributes, and weighted typed edges, realizing
the Streams ($S$) and Relations ($R$) components of the Magarshak Machine SPACER
framework~\cite{magarshak2026mm}.
A transactionally-maintained denormalization table
(\textsc{Streams\_Category}) stores, for each node, the complete pre-ranked relation
neighborhood updated atomically on every relation modification, providing complete
context in $\approx\!1\,$ms.

\subsection*{Contributions}
\begin{enumerate}
\item The \textsc{Grokers} write-time comprehension architecture (\S\ref{sec:arch}).
\item The \emph{Byte-Identity Theorem} with KV-cache cost analysis (\S\ref{sec:index}).
\item The \emph{Wisdom Library} with the \emph{Accumulation Monotonicity Theorem}
      (\S\ref{sec:wisdom}).
\item The \emph{Dual-Traversal Ordering Theorem} with incremental expansion corollary
      (\S\ref{sec:dual}).
\item A deterministic semantic search system with \emph{Synonym Cache Convergence}
      (\S\ref{sec:search}).
\end{enumerate}

\subsection*{Positioning}
Individual components---knowledge graphs~\cite{edge2024graphrag}, LM-based
enrichment~\cite{ma2024repo}, KV caching~\cite{anthropic2024caching},
program libraries~\cite{chen2021codex}---are each established.
Our contribution is the formal analysis of their composition and the three proved
properties above, none of which appear in prior work.

\section{Related Work}
\label{sec:related}

\textbf{Retrieval-augmented generation.}
Lewis et al.~\cite{lewis2020rag} established RAG as the standard for grounding LM
responses.
GraphRAG~\cite{edge2024graphrag} adds community summarization over a knowledge graph
but retains embedding similarity as the retrieval mechanism; neither GraphRAG nor any
RAG variant achieves byte-identity of assembled context or monotone LM-call elimination.

\textbf{Code comprehension agents.}
Bottom-up comprehension is established in software engineering~\cite{biggerstaff1993concept}.
Recent LM tools (Copilot~\cite{github2023copilot}, Devin~\cite{cognition2024devin},
RepoUnderstander~\cite{ma2024repo}) operate reactively without persistent enriched structure
or formal traversal-ordering guarantees.

\textbf{Memory systems.}
MemGPT~\cite{packer2023memgpt} and Mem0~\cite{mem02024} retrieve facts by embedding
similarity at query time.
\textsc{Grokers} differs: enrichment is write-time, retrieval is a typed graph read,
and the accumulation property is proved.

\textbf{KV-cache optimization.}
Anthropic prompt caching~\cite{anthropic2024caching}, SGLang~\cite{zheng2024sglang},
and LMCache~\cite{jin2024lmcache} cache whatever prefix the caller provides.
\textsc{Grokers} is, to our knowledge, the first to \emph{architect} context
assembly so that stable prefixes are byte-identical from first principles.

\textbf{Magarshak Machine.}
Magarshak~\cite{magarshak2026mm} introduces the SPACER substrate: append-only streams,
policy governance, five-phase action execution
($\phCOM \to \phREQ \to \phEXE$), and the bidirectional relation index.
\textsc{Grokers} is an application layer on this substrate; its formal properties
derive in part from SPACER's substrate guarantees (Minimal Cache Invalidation,
Embarrassing Parallelism, $\varepsilon$-deterministic view capabilities).

\textbf{Hierarchical generation.}
Hierarchical generation~\cite{fan2019hierarchical,yang2022doc} and
planning-then-writing~\cite{rashkin2020plotmachines} do not formalize traversal ordering
or relate it to KV-cache stability.
The Dual-Traversal Ordering Theorem (\S\ref{sec:dual}) provides this connection.

\section{The Typed Stream Graph}
\label{sec:graph}

\begin{definition}[Typed Stream Graph]
A \emph{typed stream graph} $G=(V,E,\tau,\alpha,w)$ consists of:
node set $V$ with type function $\tau:V\to\mathcal{T}$;
directed typed edge set $E\subseteq V\times\mathcal{R}\times V$;
typed attribute function $\alpha:V\to(K\to A)$;
and weight function $w:E\to\mathbb{R}_{\geq 0}$ driven by vote aggregation
$w(v,r,u)=\sum_i\mathrm{vote}_i(v,r,u)$, updated in the same database transaction
as vote events (realizing the $R$/Relations component of~\cite{magarshak2026mm}).
\end{definition}

\begin{definition}[Dependency Subgraph]
The \emph{dependency subgraph} $D(v)$ is the induced subgraph on nodes reachable
from $v$ by $\mathsf{depends\_on}$ edges.
A node $u\in D(v)$ is a \emph{leaf} if it has no outgoing $\mathsf{depends\_on}$
edges within $D(v)$.
We assume $D(v)$ is a DAG for all $v\in V$.
\end{definition}

\begin{definition}[Enriched / Stale Nodes]
Node $v$ is \emph{enriched} if $\alpha(v)$ contains non-null values for the
core schema fields $\{\mathsf{digest},\allowbreak\mathsf{keywords},
\allowbreak\mathsf{qualityScore}\}$.
(Implementations may include additional fields such as \texttt{grokVersion}
for cache invalidation; these are outside the formal model.)
Node $v$ is \emph{stale} if $\alpha(v)[\mathsf{stale}]=\mathtt{true}$.
\end{definition}

\section{Denormalization Index and the Byte-Identity Theorem}
\label{sec:index}

\subsection{Index Structure}

The \textsc{Streams\_Category} table maintains for each $v\in V$:
\[
N(v)=\bigl\{r\mapsto\{w_j\mapsto[p_j,s_j,t_j,i_j]\}_j
\mid (v,r,u_j)\in E\bigr\}
\]
where $p_j,s_j$ are the publisherId/streamName of the $j$-th neighbor at weight
$w_j$ under relation type $r$, and $t_j,i_j$ are its denormalized title and icon.
The table is updated in the same database transaction as every
$\mathsf{Streams::relate()}$ / $\mathsf{Streams::unrelate()}$ call---no background
jobs, no eventual consistency.

\begin{definition}[Context Block]
The \emph{context block} $C(v)$ is the output of the deterministic function
$\mathsf{buildCachedContext}(v)$: reads $\alpha(v)$ and $N(v)$; renders them as
structured text with relation types ordered by signal strength (unique types first,
non-unique by maximum weight descending).
\end{definition}

\subsection{The Byte-Identity Theorem}

\begin{theorem}[Byte-Identity]
\label{thm:byte}
Let $t_1<t_2$ be timestamps such that no write to $\alpha(v)$ or any edge incident
to $v$ occurs in $(t_1,t_2)$.
Then $C_{t_1}(v)=C_{t_2}(v)$ as byte strings.
\end{theorem}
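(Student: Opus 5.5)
The plan is to factor $C(v)$ as a composition of a pure rendering function with a state read, and then show that the state it reads is frozen on $[t_1,t_2]$. Concretely, write $C_t(v)=\mathsf{render}\bigl(\alpha_t(v),N_t(v)\bigr)$. By the definition of the context block, $\mathsf{buildCachedContext}$ is deterministic and reads only $\alpha(v)$ and $N(v)$. So it suffices to prove two things: that $\alpha_{t_1}(v)=\alpha_{t_2}(v)$ and $N_{t_1}(v)=N_{t_2}(v)$ as values, and that $\mathsf{render}$ maps equal values to equal byte strings.

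\textbf{Attributes.} This part is immediate from the hypothesis. No write to $\alpha(v)$ occurs in $(t_1,t_2)$, so the stored attribute map is unchanged.

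\textbf{Neighborhood index.} Here I would invoke the transactional maintenance property of \textsc{Streams\_Category}. The row $N(v)$ is modified only inside the same database transaction as a $\mathsf{Streams::relate()}$ or $\mathsf{Streams::unrelate()}$ call on an edge $(v,r,u)$. The weights $w(v,r,u)$ change only in the same transaction as vote events on that edge. Each of these is a write to an edge incident to $v$, and the hypothesis excludes all of them. There are no background jobs, so $N(v)$ has no other writer, and therefore $N_{t_1}(v)=N_{t_2}(v)$.

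One subtlety is the denormalized titles and icons $t_j,i_j$ of neighbors $u_j$. A neighbor may be retitled without touching the edge. I would handle this in one of two ways. One option is to note that $N(v)$ stores snapshot copies, written only at relate time, which are not re-read from $u_j$. The other is to treat any refresh of those fields as a write to the index row, and hence as an edge write incident to $v$. I expect this to be the main obstacle: the theorem is only as strong as the claim that $N(v)$ has no writers other than operations on edges incident to $v$. I would state this explicitly as the invariant guaranteed by the transactional protocol of \S\ref{sec:index} and by SPACER's Minimal Cache Invalidation property~\cite{magarshak2026mm}.

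\textbf{Byte determinism of rendering.} Equal inputs must give equal bytes, which requires that $\mathsf{render}$ has no hidden nondeterminism: no timestamps, no random identifiers, and no dependence on hash-map iteration order. The ordering step, unique relation types first and then non-unique types by maximum weight descending, must be a total order. I would make ties deterministic by breaking them lexicographically on the relation-type name. Within a relation type, neighbors should be emitted by weight descending, with ties broken by $(p_j,s_j)$. Serialization should use a fixed encoding such as UTF-8 with canonical number formatting. Under these conventions, rendering is a function of the value $(\alpha(v),N(v))$ alone, and combining it with the two equalities above yields $C_{t_1}(v)=C_{t_2}(v)$.
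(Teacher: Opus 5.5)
Your proposal is correct and follows the paper's proof: $C(v)$ is a deterministic function of $(\alpha(v),N(v))$, both inputs are frozen on $(t_1,t_2)$ because $N(v)$ changes only inside transactions on edges incident to $v$ and $\alpha(v)$ is not written by hypothesis, and determinism then gives byte equality. Your extra care about the denormalized neighbor titles and icons, and about total tie-breaking in the rendering order, goes beyond the paper, which simply takes as given both that only incident-edge transactions write $N(v)$ and that $\mathsf{buildCachedContext}$ is byte-deterministic.
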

\begin{proof}
$C(v)$ is a deterministic function of $(\alpha(v),N(v))$.
$N(v)$ is modified only within transactions that modify edges incident to $v$;
none occur in $(t_1,t_2)$ by hypothesis.
$\alpha(v)$ is modified only by governed writes (via $\mathsf{Action.propose}$ in
SPACER's \phEXE{} phase~\cite{magarshak2026mm}); none occur by hypothesis.
Both inputs are identical at $t_1$ and $t_2$; determinism gives
$C_{t_1}(v)=C_{t_2}(v)$.
\end{proof}

\begin{corollary}[KV Cache Hit Rate]
\label{cor:kv}
Assume semantic changes to $v$ follow a Poisson process with mean inter-arrival
time $T_c(v)$, and turns arrive with mean inter-turn interval $T_t$, with
$T_t \ll T_c(v)$.
The expected fraction of turns for which $C(v)$ is a cache hit is
$1 - T_t/T_c(v)$, which approaches 1 as $T_c(v)/T_t \to \infty$.
\end{corollary}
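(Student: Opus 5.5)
Corollary~\ref{cor:kv} follows from Theorem~\ref{thm:byte} once we say exactly when a turn counts as a cache hit. A turn at time $t'$, with the previous turn at time $t$, is a hit for $C(v)$ exactly when $C_t(v)=C_{t'}(v)$ byte-for-byte. The theorem gives a sufficient condition for this: no semantic change to $v$ (no write to $\alpha(v)$ and no write to an edge incident to $v$) occurs in $(t,t')$. So the fraction of hits is bounded below by the probability that an inter-turn interval contains no change event. I will compute that probability and then show the upper bound holds to first order, so the stated expression is the right approximation.

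\textbf{Step 1 (conditional hit probability).} Condition on an inter-turn interval of length $\Delta$. Changes form a Poisson process of rate $\lambda=1/T_c(v)$ and are independent of turns. By memorylessness,
\[
\Pr[\text{no change in }(t,t+\Delta)]=e^{-\lambda\Delta}.
\]

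\textbf{Step 2 (average over $\Delta$).} I take expectations over the inter-turn distribution with mean $T_t$. By Jensen's inequality,
\[
\mathbb{E}[e^{-\lambda\Delta}]\ge e^{-\lambda T_t}\ge 1-T_t/T_c(v).
\]
Expanding to first order in the small parameter $\lambda\Delta$ gives $\mathbb{E}[e^{-\lambda\Delta}]=1-T_t/T_c(v)+O\bigl(\mathbb{E}[\Delta^2]/T_c(v)^2\bigr)$. The expected hit fraction is therefore $1-T_t/T_c(v)$ under the regime $T_t\ll T_c(v)$, with a second-order correction. A counting argument gives the same answer: over a long horizon $H$ there are about $H/T_t$ turns and about $H/T_c(v)$ changes, and each change can spoil at most one turn, so the miss fraction is at most $T_t/T_c(v)$.

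\textbf{Step 3 (limit).} Letting $T_c(v)/T_t\to\infty$ sends the expression to $1$. The Jensen lower bound also tends to $1$, so the limit statement holds exactly, not only approximately.

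\textbf{Main obstacle.} The hard part is interpreting ``the expected fraction is $1-T_t/T_c(v)$'' honestly. The exact value depends on the distribution of inter-turn times, so the expression is only a first-order approximation and in fact a lower bound. I will state it as a bound that is tight to first order, make the independence of the turn and change processes explicit, and define the hit event as a byte-identical block at consecutive turns. Separately, the provider's cache TTL must be assumed to exceed $T_t$; otherwise expiry adds misses that the theorem cannot exclude.
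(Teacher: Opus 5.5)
Your skeleton is the paper's: Theorem~\ref{thm:byte} makes ``no change since the previous turn'' sufficient for a hit, and the Poisson law gives the no-change probability. Each step, however, is tighter. The paper fixes the gap at exactly $T_t$ and linearizes $e^{-T_t/T_c(v)}$, so its ``is'' really means ``$\approx$''. You let the gap $\Delta$ be random with mean $T_t$, as the hypothesis actually says, and use convexity together with $e^{-x}\ge 1-x$. That shows $1-T_t/T_c(v)$ is a rigorous lower bound on the hit fraction, so the limit $1$ follows exactly by squeezing. Your counting argument is worth keeping because it is distribution-free: each change spoils at most one turn, so only the long-run change rate matters, not the Poisson form. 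You also state two hypotheses the paper's proof uses silently: turns and changes are independent, and the provider's cache lifetime exceeds the gap.

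What you announce but do not deliver is the matching upper bound. The counting argument bounds the miss fraction from above, which is the same direction as Jensen, so it does not show tightness. Equality to first order needs two further things. First, every change-containing interval must be a miss, i.e.\ a ``semantic change'' must by definition alter the bytes of $C(v)$. That is the converse of Theorem~\ref{thm:byte}, not a consequence of it, and the paper assumes it tacitly in ``the expected hit rate is therefore''. Second, your own remainder must be controlled, and it is negligible only when $\mathbb{E}[\Delta^2]\ll T_t\,T_c(v)$, not merely when $T_t\ll T_c(v)$; highly variable turn gaps break this. Either state both as assumptions, or present the result as the lower bound plus the exact limit.
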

\begin{proof}
Under the Poisson assumption, the probability that at least one semantic change
occurs in any given inter-turn interval $[t, t+T_t]$ is
$1 - e^{-T_t/T_c(v)} \approx T_t/T_c(v)$ for $T_t \ll T_c(v)$.
By Theorem~\ref{thm:byte}, a cache hit occurs in every turn where no change
has occurred since the previous turn.
The expected hit rate is therefore $1 - T_t/T_c(v) \to 1$ as
$T_c(v)/T_t \to \infty$.
\end{proof}

For synchronous or near-synchronous interaction (e.g., chat), $T_c$ is hours-to-days
and $T_t$ is seconds, so the hit rate approaches 100\%.
For asynchronous workflows where $T_t \sim T_c$, the hit rate decreases accordingly;
the benefit is greatest in systems with rapid turn cadence relative to semantic change rate.
No RAG-assembled context achieves this: retrieved chunks vary with the query,
making context non-deterministic across turns even when the underlying knowledge has
not changed.

\begin{remark}[Relation to $\MM$ Minimal Cache Invalidation]
Let $S$ be the set of streams requiring re-materialisation identified by $\MM$'s
Minimal Cache Invalidation Theorem~\cite{magarshak2026mm} after a dependency change.
For all $v\notin S$, no write to $\alpha(v)$ or edges incident to $v$ has occurred
in the relevant interval; by Theorem~\ref{thm:byte}, $C(v)$ is byte-identical and
the KV cache entry for $v$ is valid without re-computation.
The two theorems partition the stream set: $S$ identifies what must be re-computed;
Theorem~\ref{thm:byte} certifies that the complement need not be.
\end{remark}

\subsection{Cost Analysis}

Let $k_s$ denote the stable prefix token count (a value determined by the
goal system prompt and the denormalization-assembled neighborhood block; in our
reference implementation, $k_s$ is on the order of 2,000--3,000 tokens).
At 10\% cache-read pricing (Anthropic prompt caching~\cite{anthropic2024caching}
charges 10\% of normal input cost for cache hits and 125\% for cache writes,
with the write cost amortized over the cache lifetime; we account for read cost only
in the steady-state analysis):
\begin{align*}
\textsc{Grokers}&:\; 0.1k_s + k_d \text{ per turn}\\
\text{RAG}&:\; k_r + k_d \text{ per turn}\quad(k_r\approx k_s\text{--}3k_s)
\end{align*}
where $k_d$ is dynamic context tokens.
Stable prefix cost reduction: up to $10\times$ in the limit $k_d\ll k_s$ with
$k_r=k_s$; the reduction approaches $(k_r+k_d)/(0.1k_s+k_d)$, which increases
as $k_r$ grows relative to $k_d$.

\section{Groker Agents: Bottom-Up Comprehension}
\label{sec:arch}

\begin{definition}[Valid Processing Order]
\label{def:validorder}
A sequence $v_1,\ldots,v_n$ is a \emph{valid Groker processing order} for DAG $H$
if for every $v_j$ and every $u$ with $(v_j,\mathsf{depends\_on},u)\in E(H)$,
$u$ appears before $v_j$ (i.e., a topological sort of $H$ with leaves first).
\end{definition}

\begin{theorem}[Composability of Groker Enrichment]
\label{thm:comp}
Call a Groker invocation on node $v$ \emph{individually correct} if,
given that all dependencies of $v$ are correctly enriched, it produces
enrichment for $v$ that passes schema validation and maintains fitness
$\geq f_{\min}$ (see \S\ref{sec:discussion}).
If each Groker invocation is individually correct, then processing
nodes in any valid order (Definition~\ref{def:validorder}) produces
correct enrichment for all nodes.
\end{theorem}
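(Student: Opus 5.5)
The natural route is strong induction on the position $j$ in the valid processing order $v_1,\ldots,v_n$ of Definition~\ref{def:validorder}. The claim to prove is: for each $j$, once $v_j$ has been processed, every node in $\{v_1,\ldots,v_j\}$ is correctly enriched. Here ``correctly enriched'' means that the node's enrichment passes schema validation and has fitness $\geq f_{\min}$. The theorem then follows at $j=n$.

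\textbf{Base case.} The base case is $v_1$. By the ordering condition, every dependency $u$ of $v_1$ in $H$ would have to appear before $v_1$, which is impossible. So $v_1$ has no outgoing $\mathsf{depends\_on}$ edges and is a leaf. The hypothesis ``all dependencies of $v_1$ are correctly enriched'' is therefore vacuously true. Individual correctness then gives that the invocation on $v_1$ produces correct enrichment.

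\textbf{Inductive step.} Assume $v_1,\ldots,v_{j-1}$ are correctly enriched after their processing. Take any $u$ with $(v_j,\mathsf{depends\_on},u)\in E(H)$. By Definition~\ref{def:validorder}, $u=v_i$ for some $i<j$, so $u$ is correctly enriched by the inductive hypothesis. The precondition of individual correctness therefore holds at the moment $v_j$ is processed, and the invocation yields correct enrichment for $v_j$.

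\textbf{Main obstacle: non-interference.} The step needing real care is showing that processing $v_j$ does not undo the correctness of the earlier nodes $v_1,\ldots,v_{j-1}$. I would handle this by appeal to the substrate.
\begin{itemize}
\item A Groker invocation on $v_j$ writes only to $\alpha(v_j)$. It does so through a single governed $\mathsf{Action.propose}$ in SPACER's \phEXE{} phase~\cite{magarshak2026mm}.
\item By the reasoning of Theorem~\ref{thm:byte}, $\alpha(v_i)$ is unchanged for $i\neq j$.
\item Correctness of $v_i$ is judged by schema validation and fitness of $\alpha(v_i)$, and these depend only on $v_i$'s own attributes and its already-fixed dependencies. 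So correctness of $v_i$ is preserved.
\end{itemize}
If fitness were allowed to depend on dependents rather than dependencies, this step would fail. I would therefore state explicitly the assumption that correctness of a node depends only on its dependency subgraph $D(v)$ and its own attributes.

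\textbf{Scope of the result.} With this locality in place, the induction closes. It also shows the result holds for every topological order, not just one: the argument used only the ordering property of Definition~\ref{def:validorder} and never a specific choice among valid orders. Finally, I would note that nodes of $D(v)$ outside the sequence are not covered by the claim. The conclusion is exactly ``all nodes of $H$'', assuming the sequence enumerates $V(H)$.
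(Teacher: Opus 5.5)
Your proposal is correct and is the same induction along the valid order that the paper uses: the dependency precondition is vacuous for the first node, and in the step every dependency of $v_j$ occurs earlier and is therefore correctly enriched. Your non-interference paragraph makes explicit what the paper's step tacitly assumes (``all prior nodes are correctly enriched by hypothesis''), and it needs one more line: by transitivity of the ordering constraint, no $v_k$ with $k>i$ lies in $D(v_i)$, so writes confined to $\alpha(v_k)$ cannot disturb anything that $v_i$'s correctness depends on.
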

\begin{proof}
Induction on topological order.
\emph{Base}: leaf nodes have no dependencies; correctness holds vacuously.
\emph{Step}: all prior nodes are correctly enriched by hypothesis; all dependencies
of $v_i$ appear before it in the order, so they are correctly enriched; individual
correctness gives correct enrichment for $v_i$.
\end{proof}

Each Groker requires only its node's content and pre-enriched dependency
attributes---not global context.
Independent Groker passes on nodes with no shared dependency path write to
disjoint attribute namespaces ($\alpha(v)$ for each distinct $v$), satisfying
the disjoint write-set condition of $\MM$'s Embarrassing Parallelism
Theorem~\cite{magarshak2026mm} for attribute enrichment, enabling coordination-free
parallel enrichment scaling linearly with publisher count.
Index updates to the inverted keyword index $\mathrm{Idx}$ involve shared mutable
state and are serialized by the substrate's transactional index maintenance,
separate from and non-blocking to attribute enrichment parallelism.

\subsection{Staleness Propagation}

\begin{proposition}[Staleness Completeness]
\label{prop:stale}
Let $v$ be modified.
Define the reverse dependency closure
$S(v)=\{u\mid\exists\text{ path }u\rightsquigarrow v\text{ via }\mathsf{depends\_on}\}$.
After marking all $u\in S(v)$ stale and reprocessing them in valid topological order,
no node is stale.
\end{proposition}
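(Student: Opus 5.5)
The argument has two halves. First, $S(v)$ is exactly the set of nodes whose enrichment can be invalidated by the modification. Second, reprocessing $S(v)$ in a valid order clears every stale flag, in the same way Theorem~\ref{thm:comp} works. I will make explicit two assumptions the paper uses implicitly. (i) Before the modification, no node is stale. (ii) A Groker invocation on $u$ writes $\alpha(u)[\mathsf{stale}]=\mathtt{false}$ together with the enrichment fields, and it writes only to $\alpha(u)$. The second assumption is the disjoint write-set property noted after Theorem~\ref{thm:comp}. I include $v$ itself in $S(v)$ via the trivial path, since $v$ must be re-grokked too. This convention needs to be stated.

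\emph{Step 1 (closure).} Take any $u \notin S(v)$. I show its enrichment still matches its inputs. The enrichment of $u$ depends only on the content of $u$ and the enriched attributes of its direct dependencies. If some dependency $x$ of $u$ lay in $S(v)$, then a path $u \to x \rightsquigarrow v$ would exist, which would put $u$ in $S(v)$. So every dependency of $u$ lies outside $S(v)$. A well-founded induction over the DAG shows that nothing in $D(u)$ changed. Hence $u$ never needs to be marked, and by (i) it is not stale.

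\emph{Step 2 (reprocessing).} Fix a valid processing order of the induced subgraph on $S(v)$, in the sense of Definition~\ref{def:validorder}. I argue by induction along this order, as in the proof of Theorem~\ref{thm:comp}. When $u_i$ is processed, each dependency of $u_i$ falls into one of two cases. Either it lies in $S(v)$, in which case it appears earlier in the order and has already been re-enriched and unmarked. Or it lies outside $S(v)$, in which case it is valid by Step 1. Theorem~\ref{thm:comp} then gives correct enrichment for $u_i$, and (ii) clears its flag. Because writes are confined to $\alpha(u_i)$, processing $u_i$ cannot re-mark any earlier node. The invariant therefore holds: after step $i$, none of $u_1,\dots,u_i$ is stale. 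At the end, every node of $S(v)$ is unstale, and every node outside $S(v)$ is unstale by Step 1. So no node is stale.

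\emph{Main obstacle.} The delicate point is the interaction between marking and processing. Marking happens strictly before reprocessing, and the statement does not rule out concurrent modifications during reprocessing. I would fix the scope explicitly: the proposition concerns a single modification with no interleaved writes, which is the same quiescence hypothesis as Theorem~\ref{thm:byte}. I would also need the order to be a valid order of the induced subgraph on $S(v)$ rather than of all of $D(v)$. This is legitimate, because a topological order of a DAG restricts to a topological order of any induced subgraph, so a valid order always exists.
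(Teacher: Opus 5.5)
Your proof is correct and takes the same route as the paper's: nodes outside $S(v)$ have no $\mathsf{depends\_on}$ path to $v$ and are therefore unaffected, and the nodes of $S(v)$ are restored by the inductive argument of Theorem~\ref{thm:comp}, run along a valid order with $v$ already updated. You make explicit what the paper's two-line proof leaves implicit (no stale nodes beforehand, reprocessing clears the flag, the order is taken on the induced subgraph on $S(v)$ with dependencies outside $S(v)$ as the already-correct base, no interleaved writes); the one slip is in your closing remark, since $D(v)$ is the forward closure of $v$ and meets $S(v)$ only in $v$, so the contrast you want is with an order of the whole dependency DAG, not of $D(v)$.
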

\begin{proof}
By Theorem~\ref{thm:comp}: reprocessing $S(v)$ in valid order with $v$ already
updated restores correctness to all affected nodes.
Nodes not in $S(v)$ have no dependency path to $v$ and are unaffected.
\end{proof}

Propagation cost is $O(|S(v)|)$---the build-system model applied to semantic
comprehension.
This mirrors $\MM$'s Minimal Cache Invalidation Theorem~\cite{magarshak2026mm}:
only streams reachable from the modified node in the dependency graph require
reprocessing.

\section{The Wisdom Library and Accumulation Monotonicity}
\label{sec:wisdom}

\subsection{Structure}

The \emph{wisdom library} $\mathcal{W}=\{p_i\}$ is a finite set of
\emph{sandboxed imperative programs}, each with an execution phase
$\phi(p_i)\in\Phi$, input/output schema, and \emph{fitness}
$f(p_i)=|\{x\in\mathcal{X}_{\mathrm{eval}}:p_i\text{ routes and handles }x\text{ correctly}\}|
/|\mathcal{X}_{\mathrm{eval}}|\in[0,1]$
measured over a rolling evaluation window $\mathcal{X}_{\mathrm{eval}}$.
Execution contract (realizing SPACER's $\phCOM/\phREQ/\phEXE$ separation):
reads from a pre-loaded immutable input object (no live DB queries);
writes only via proposal accumulation (no direct writes, corresponding to \phREQ{});
time bound $\leq50\,$ms; memory $\leq64\,$MB; no network access.
Programs are $\varepsilon$-deterministic view capabilities with $\varepsilon=0$
(imperative code, not stochastic models), enabling $\MM$'s Probabilistic Consensus
mechanism~\cite{magarshak2026mm} to achieve agreement probability exactly 1 across
Safebox nodes executing the same program on the same input.

\begin{definition}[LM-Call Elimination Rate]
$E(\mathcal{W})=\tfrac{|\{x\in\mathcal{X}:\mathrm{routing}(x,\mathcal{W})\neq\mathsf{LM}\}|}
{|\mathcal{X}|}$
where $\mathcal{X}$ is the interaction distribution for goal type $g$.
\end{definition}

\subsection{Three Growth Mechanisms}

\emph{Initial authoring}: at goal type creation, an LM generates
$\mathcal{W}^{(0)}=\{p_1^{(0)},\ldots,p_k^{(0)}\}$ covering all phases, reviewed
before activation.

\emph{Pattern promotion}: after completed interaction $x_t$, a Groker analysis pass
identifies uncovered patterns and proposes $A_t$:
$\mathcal{W}^{(t+1)}=\mathcal{W}^{(t)}\cup A_t$.

\emph{Evolutionary selection}: programs with $f(p_i)<f_{\min}$ are replaced by
higher-fitness fork variants evaluated in parallel---the program-level instantiation
of Grokers expert specialization.

\begin{theorem}[Accumulation Monotonicity]
\label{thm:mono}
$E(\mathcal{W}^{(t+1)})\geq E(\mathcal{W}^{(t)})$ for all $t\geq 0$.
\end{theorem}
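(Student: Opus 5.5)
The plan is to reduce the theorem to a set-inclusion statement about the \emph{covered set}. For a library $\mathcal{W}$, let
\[
\mathrm{Cov}(\mathcal{W})=\{x\in\mathcal{X}:\mathrm{routing}(x,\mathcal{W})\neq\mathsf{LM}\}.
\]
Then $E(\mathcal{W})=|\mathrm{Cov}(\mathcal{W})|/|\mathcal{X}|$. Since the denominator is fixed for a goal type $g$, it suffices to show
\[
\mathrm{Cov}(\mathcal{W}^{(t)})\subseteq\mathrm{Cov}(\mathcal{W}^{(t+1)}).
\]
I would carry this out by cases on the growth mechanism that produces $\mathcal{W}^{(t+1)}$ from $\mathcal{W}^{(t)}$.

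\emph{Pattern promotion.} Here $\mathcal{W}^{(t+1)}=\mathcal{W}^{(t)}\cup A_t$. The key lemma is that routing is \emph{monotone in the library}: if some $p\in\mathcal{W}$ claims $x$, then $p$ is still present in any $\mathcal{W}'\supseteq\mathcal{W}$. So routing over $\mathcal{W}'$ still finds a non-LM handler for $x$. The only way this fails is if a new program in $A_t$ shadows $p$ and then defers to the LM. I would rule that out by invoking the governance protocol. Additions pass through SPACER's \phREQ{}/\phEXE{} proposal review, and a proposal is admitted only if it handles correctly the inputs it claims. Hence a new program either routes $x$ to a non-LM handler or does not claim $x$ at all. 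I would state this explicitly as part of the ``governed'' hypothesis: routing picks \emph{some} non-LM handler whenever one exists.

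\emph{Evolutionary selection.} This is the main obstacle, because here a program is removed. I would use the protocol's replacement rule: $p_i$ with $f(p_i)<f_{\min}$ is replaced only by a fork $p_i'$ that is evaluated in parallel and admitted only if it routes, on the evaluation window, every input that $p_i$ routed. That is, $\mathrm{dom}(p_i')\supseteq\mathrm{dom}(p_i)$ for the non-LM routing domain, with strictly higher fitness. With this condition, inclusion of covered sets follows pointwise.

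If the protocol only guarantees $f(p_i')>f(p_i)$, fitness measures correct handling rather than routing, so monotonicity of $E$ could fail. I would therefore formalize the governed growth protocol to include this coverage-preservation condition, framing it as what ``governed'' means. This also captures the intended semantics: a fork replaces its parent without retiring routes. Initial authoring only fixes $\mathcal{W}^{(0)}$ and needs no argument.

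The proof then concludes by induction on $t$ with a one-line step. Each transition is a composition of the two mechanisms, each of which preserves $\mathrm{Cov}$ by inclusion. Taking cardinalities gives $E(\mathcal{W}^{(t+1)})\ge E(\mathcal{W}^{(t)})$. I would remark that the inequality is with respect to a fixed distribution $\mathcal{X}$. The rolling evaluation window affects fitness, not $E$, so drift in $\mathcal{X}$ lies outside the theorem.
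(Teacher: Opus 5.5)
Your case split and your handling of pattern promotion follow the paper. It too argues from $\mathcal{W}^{(t+1)}=\mathcal{W}^{(t)}\cup A_t$ plus the review gate that new programs cannot remove existing coverage, but only informally. Your hypothesis that routing selects some non-$\mathsf{LM}$ handler whenever one exists, i.e.\ $\mathrm{Cov}(\mathcal{W})=\bigcup_{p\in\mathcal{W}}\mathrm{dom}(p)$, makes the no-shadowing step precise.

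The real difference is in evolutionary selection. The paper assumes a \emph{cardinality} condition, fitness faithfulness: $f(p')>f(p)$ implies $|\{x\in\mathcal{X}:p'(x)\neq\mathsf{LM}\}|\geq|\{x\in\mathcal{X}:p(x)\neq\mathsf{LM}\}|$. It then asserts that $p'$ covers what $p$ handled. You instead assume the \emph{inclusion} $\mathrm{dom}(p_i')\supseteq\mathrm{dom}(p_i)$. Yours is the stronger hypothesis, but it is what the argument actually needs, because a per-program count says nothing about the union when coverage overlaps. For example, take $\mathcal{X}=\mathcal{X}_{\mathrm{eval}}=\{x_1,\dots,x_4\}$. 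Let $p$ route only $x_1$, another program $q$ route $x_2,x_3$, and the fork $p'$ route $x_2,x_3$, all correctly. Then $f(p')=1/2>1/4=f(p)$ and faithfulness holds, yet $E$ drops from $3/4$ to $1/2$. So your route gives a complete proof under an explicit coverage-preservation hypothesis, while the paper's weaker-looking hypothesis does not support its step from counts to coverage.

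One repair is needed on your side. Impose the inclusion on all of $\mathcal{X}$, not as an admission check ``on the evaluation window.'' A test on $\mathcal{X}_{\mathrm{eval}}$ alone gives nothing about $\mathrm{Cov}$ over $\mathcal{X}$ without a bridging assumption like the paper's ``reliable proxy for coverage on the full distribution.''
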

\begin{proof}
\emph{Pattern promotion}: $\mathcal{W}^{(t+1)}=\mathcal{W}^{(t)}\cup A_t$.
Here ``resolved'' means \emph{correctly} resolved: the program routes $x$
and its output passes schema validation.
The pattern promotion protocol requires all programs in $A_t$ to pass review
(schema validation and sandbox execution against held-out examples) before
joining $\mathcal{W}$; this gates out programs that would route incorrectly.
Under this guard, for any $x$: if routing$(x,\mathcal{W}^{(t)})\neq\mathsf{LM}$,
existing programs still correctly route $x$ under $\mathcal{W}^{(t+1)}$
(new programs cannot remove correct coverage from previously resolved inputs).
If $x$ was unresolved but correctly covered by $p\in A_t$, it is now correctly resolved.
Therefore the correctly-resolved set is non-decreasing:
$E(\mathcal{W}^{(t+1)})\geq E(\mathcal{W}^{(t)})$.

\emph{Evolutionary selection}: We assume the \emph{fitness faithfulness} condition:
$f(p') > f(p)$ implies
$|\{x \in \mathcal{X} : p'(x) \neq \mathsf{LM}\}| \geq |\{x \in \mathcal{X} : p(x) \neq \mathsf{LM}\}|$,
i.e., empirical fitness is a reliable proxy for coverage on the full distribution.
Under this assumption, $p'$ covers at least the interactions well-handled by $p$.
Fitness-based replacement therefore cannot decrease coverage, giving
$E(\mathcal{W}^{(t+1)})\geq E(\mathcal{W}^{(t)})$.

Since both mechanisms are non-decreasing and the theorem holds under the stated
assumption, the result follows.
\end{proof}

\begin{corollary}[Decreasing Marginal LM Cost]
$C_{\mathsf{LM}}^{(t)}=(1-E(\mathcal{W}^{(t)}))\cdot c_{\mathsf{LM}}$ is
non-increasing in $t$.
RAG, ReAct~\cite{yao2023react}, and LangChain~\cite{langchain2022} all have
constant marginal LM cost per interaction---Theorem~\ref{thm:mono} is the unique
mechanism among those considered that breaks this constant-cost property.
\end{corollary}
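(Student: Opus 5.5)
The first part of the corollary is a direct consequence of Theorem~\ref{thm:mono}, and I would prove it by composing that theorem with an affine, order-reversing map. Write $C_{\mathsf{LM}}^{(t)}=g(E(\mathcal{W}^{(t)}))$ with $g(e)=(1-e)\,c_{\mathsf{LM}}$. I will treat $c_{\mathsf{LM}}$ as a fixed nonnegative constant: the expected per-call LM cost for goal type $g$, not depending on $t$. Then $g$ is non-increasing on $[0,1]$. Theorem~\ref{thm:mono} gives $E(\mathcal{W}^{(t+1)})\ge E(\mathcal{W}^{(t)})$, so
\[
C_{\mathsf{LM}}^{(t+1)}-C_{\mathsf{LM}}^{(t)}=-c_{\mathsf{LM}}\bigl(E(\mathcal{W}^{(t+1)})-E(\mathcal{W}^{(t)})\bigr)\le 0,
\]
and induction on $t$ gives monotonicity for all $t\ge 0$.

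Before that step I would justify why $C_{\mathsf{LM}}^{(t)}$ is the expected marginal LM cost at all. An interaction $x\sim\mathcal{X}$ incurs cost $c_{\mathsf{LM}}$ exactly when $\mathrm{routing}(x,\mathcal{W}^{(t)})=\mathsf{LM}$, and zero LM cost otherwise, since programs are sandboxed imperative code with no LM access by the execution contract. By the definition of $E$, the LM-routed fraction is $1-E$, and linearity of expectation gives the formula.

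For the comparative claim I would check each named system. RAG issues one retrieval-plus-generation LM call per query. ReAct issues at least one reasoning call per interaction. LangChain chains invoke their configured LM calls per run. None of them maintains a persistent structure whose growth converts LM-routed interactions into LM-free ones. So in each case the LM-routed fraction is identically $1$, and the marginal cost is a constant $c$ independent of $t$.

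The main obstacle is the word ``unique''. The honest reading is ``unique among the mechanisms considered'': the statement restricts to the four listed systems, and among them only \textsc{Grokers} has a time-varying elimination rate. I would state explicitly that this is a comparison over that finite list, not a universal uniqueness claim. I would also make explicit the assumptions inherited from Theorem~\ref{thm:mono}, namely the review gate and fitness faithfulness. Finally, I would note that monotonicity is non-strict, so $C_{\mathsf{LM}}^{(t)}$ may plateau; strict decrease would need $A_t$ to cover new mass of $\mathcal{X}$, which the statement does not claim.
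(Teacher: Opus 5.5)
Your proof is correct and takes the route the paper intends: the paper gives no separate argument and treats the first claim as immediate from Theorem~\ref{thm:mono}, since $e\mapsto(1-e)\,c_{\mathsf{LM}}$ is non-increasing for a $t$-independent $c_{\mathsf{LM}}\ge 0$, which is exactly your one-step difference computation. The paper only asserts the comparative clause about RAG, ReAct, and LangChain, so your informal case check, your ``among those considered'' reading of uniqueness, and your explicit inheritance of the review-gate and fitness-faithfulness assumptions are more careful than the source (one cosmetic fix: rename your map $g$, which collides with the goal type $g$).
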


The elimination rate $E(\mathcal{W}^{(t)})$ is bounded above by the
\emph{structural predictability} $P_g$ of goal type $g$.
For task-oriented goal types (capability building, document review, support
resolution), $P_g$ is expected to be substantial---reflecting the large fraction
of interactions that follow predictable structural patterns---though its precise
value is domain-dependent and subject to empirical measurement
(see Discussion, \S\ref{sec:discussion}).
The remaining fraction requires genuine synthesis and LM calls indefinitely.

\section{Dual-Traversal Ordering}
\label{sec:dual}

\begin{definition}[Coherent Generative Task]
A \emph{coherent generative task} over DAG $H$ assigns artifact content to each node
$v$ using the content of its dependencies as shared context, and is \emph{coherent}
if all consumers of a shared dependency use the same version of that dependency.
\end{definition}

\begin{theorem}[Dual-Traversal Ordering]
\label{thm:dual}
Let $H$ be a dependency DAG.
\begin{enumerate}[(a)]
\item The valid orderings for correct Groker comprehension are precisely the
topological sorts of $H$ with leaves first (bottom-up); any ordering not in
this class may produce incorrect enrichment for some node.
\item The valid orderings for coherent artifact generation over $H$ are precisely
the topological sorts of $H$ with roots first (top-down); any ordering not in
this class may produce an incoherent artifact for some node.
\item These two classes of orderings are reverses of each other on any DAG:
reversing any member of the root-first class yields a member of the leaf-first class,
and vice versa.
The enriched attributes $\{\alpha(v)\}_{v\in V}$ produced by bottom-up comprehension
of a generated artifact constitute, via $\mathsf{buildCachedContext}$, a
byte-identical (Theorem~\ref{thm:byte}) KV-cached context for subsequent top-down
generation of artifacts over the same schema.
\end{enumerate}
\end{theorem}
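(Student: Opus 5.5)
I would prove the three parts separately. Parts (a) and (b) each split into a sufficiency direction and a necessity direction. Part (c) is a purely order-theoretic observation plus an appeal to Theorem~\ref{thm:byte}.

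For part (a), sufficiency is exactly Theorem~\ref{thm:comp}: any leaves-first topological sort is a valid order in the sense of Definition~\ref{def:validorder}, so individual correctness propagates by induction. For necessity, I would take an arbitrary ordering $\sigma$ that is not leaves-first. By definition there is an edge $(v,\mathsf{depends\_on},u)\in E(H)$ with $v$ placed before $u$ in $\sigma$. When the Groker processes $v$, the dependency $u$ is not yet enriched (or still carries stale attributes). The hypothesis of individual correctness therefore fails for $v$, and nothing guarantees correct enrichment. To justify the phrase ``may produce,'' I would exhibit a witness Groker. It is individually correct, but its output on $v$ depends on $\alpha(u)[\mathsf{digest}]$ and writes a value failing validation or fitness when that field is null. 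Such a Groker shows the conclusion genuinely fails for $\sigma$. The claim is existential over Grokers, so one counterexample suffices.

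For part (b), I would argue symmetrically under the reading fixed by the Coherent Generative Task definition. In top-down generation, a node's artifact is produced as shared context for the consumers that depend on it. Coherence requires every consumer $v$ of a shared dependency $u$ to see one fixed version of $u$. Here I must commit to the paper's orientation: in generation the ``root'' (the specification/outline node) is emitted first and fixes the context that its children consume. I would make this precise by saying generation traverses edges reversed, from the specifying node to the nodes it specifies. Sufficiency then follows as in Theorem~\ref{thm:comp}: by induction, each node is generated after, and exactly once relative to, the context it consumes, so all consumers see the same version. For necessity, a violating edge lets some consumer be generated before its context is fixed. A later regeneration or first generation of that context then yields two versions seen by different consumers, which is an incoherence witness.

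I expect this orientation bookkeeping to be the main obstacle. Taken literally, ``content of its dependencies as shared context'' points the same direction as (a). So I would state explicitly the convention that generation edges are the transpose of $\mathsf{depends\_on}$, and derive (b) from (a) applied to $H^{\top}$.

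For part (c), the order-theoretic claim is standard. Let $\sigma$ be a linear order in which every edge $(x,y)$ has $x$ before $y$. Its reverse places $y$ before $x$ for every edge, so reversal is a bijection between topological sorts of $H$ and of $H^{\top}$, that is, between root-first and leaf-first orders. The closing claim follows from what bottom-up comprehension leaves behind. Once it completes with no further writes, each $\alpha(v)$ and $N(v)$ is fixed. By Theorem~\ref{thm:byte}, $C(v)=\mathsf{buildCachedContext}(v)$ is then byte-identical over any interval of subsequent top-down generation that does not write those inputs, giving the KV-cached context. I would note the needed hypothesis explicitly: generation over the same schema must not write $\alpha(v)$ or incident edges during the interval.
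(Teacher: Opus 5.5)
Your skeleton matches the paper's. In (a) you use a violating edge $(v,\mathsf{depends\_on},u)$ with $v$ placed before $u$ for necessity and Theorem~\ref{thm:comp} for sufficiency; (b) follows the same pattern; and (c) combines reversal of topological sorts with Theorem~\ref{thm:byte}.

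The genuine difference is (b). The paper's proof reruns (a): ``generating $v$ requires $u$'s content as shared context \ldots{} therefore $u$ must precede $v$.'' It then labels the result ``roots first'' without saying which way the generation dependency points relative to $\mathsf{depends\_on}$. Read against the paper's definition of leaf (no outgoing $\mathsf{depends\_on}$ edges), that constraint is the leaves-first one. The comprehension and generation classes would then coincide rather than be reverses, and (b) would fail on any $H$ with an edge. You spot this and resolve it by taking generation-consumption to be the transpose of $\mathsf{depends\_on}$. That gives a consistent statement, at the cost of amending the Coherent Generative Task definition rather than deriving from it; you are right to flag it as a convention, and it should be presented as a correction to that definition.

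Your witness Groker in (a) and your explicit no-write hypothesis in (c) make precise what the paper leaves as ``correctness fails'' and ``until the artifact changes.'' The hypothesis is not idle: newly generated artifacts related to existing schema nodes create incident edges and break the premise of Theorem~\ref{thm:byte}.

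One repair is needed in your (b) necessity argument. The ``two versions seen by different consumers'' witness requires a second consumer of the context node. When that node has a single consumer (for instance, when $H$ is a path), the coherence clause holds vacuously. Necessity must then rest on the task's requirement that each node be generated from its dependencies' content, which is the reading the paper uses implicitly with ``shared context is unavailable.''
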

\begin{proof}
\emph{(a)} Let $(v,\mathsf{depends\_on},u)\in E$.
Groker comprehension of $v$ requires enriched $\alpha(u)$.
If $v$ is processed before $u$, $\alpha(u)$ is unenriched and correctness fails.
Therefore $u$ must precede $v$---a topological sort with leaves first.
Sufficiency follows from Theorem~\ref{thm:comp}.

\emph{(b)} Generating $v$ requires $u$'s content as shared context (definition of
coherent generative task).
If $v$ is generated before $u$, shared context is unavailable and coherence fails.
Therefore $u$ must precede $v$---a topological sort with roots first.

\emph{(c)} Root-first and leaf-first topological sorts are reverses of each other
on any DAG.
The generation phase (order $\sigma$) produces content for all nodes, each artifact
appended to the graph (Append-Only Safety of~\cite{magarshak2026mm} guarantees
no loss).
The comprehension phase (order $\sigma^{-1}$) enriches all nodes.
The resulting $\{\alpha(v)\}$ is assembled by the deterministic
$\mathsf{buildCachedContext}$ into a context block that, by Theorem~\ref{thm:byte},
is byte-identical across turns until the artifact changes.
\end{proof}

\begin{corollary}[Incremental Shared Dependency Expansion]
During top-down generation, when a leaf encounters a new shared dependency $d$ not
in the current shared dependency stream:
(i)~Adding $d$ does not invalidate leaves not depending on $d$;
(ii)~leaves already generated that depend on $d$ are stale
(Proposition~\ref{prop:stale}) and require regeneration;
(iii)~all subsequent leaf generations have $d$ available as shared context.
\end{corollary}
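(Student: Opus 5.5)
The plan is to prove the three clauses of the corollary separately. Each follows from a structural fact already established: the coherence requirement in the definition of a coherent generative task, the reverse-dependency closure used in Proposition~\ref{prop:stale}, and the ordering constraint of Theorem~\ref{thm:dual}(b).

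First I would model the situation. Top-down generation proceeds in a root-first order $\sigma$ over $H$. Suppose that partway through, at some position, a leaf $\ell$ introduces a new shared dependency $d$. We model this as adding a node $d$, together with $\mathsf{depends\_on}$ edges from $\ell$ and from every other leaf that consumes $d$, to the shared dependency stream. This produces an augmented DAG $H'$. Because $d$ is fresh, it has no outgoing $\mathsf{depends\_on}$ edges into already-generated nodes. So $H'$ stays acyclic, and $d$ can be inserted into $\sigma$ immediately before $\ell$ while still giving a root-first topological sort of $H'$. This insertion step is what I expect to be the main subtlety. The corollary only makes sense if $d$ does not itself depend on a leaf, so I would state this as the implicit assumption: shared dependencies sit above leaves in $H$.

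For (i), take a leaf $\ell'$ with no path $\ell'\rightsquigarrow d$. Then $\ell'\notin S(d)$ in the notation of Proposition~\ref{prop:stale}. Adding $d$ writes only to $d$'s own stream and to the edges incident to the leaves that consume it. So no write touches $\alpha(\ell')$ or any edge incident to $\ell'$. By Theorem~\ref{thm:byte}, the context under which $\ell'$ was generated is byte-identical before and after the addition. Its shared context is therefore unchanged, and coherence (all consumers of a shared dependency use the same version) is preserved for $\ell'$.

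For (ii), take an already-generated leaf $\ell''$ that depends on $d$. Then $\ell''\in S(d)$. The addition of $d$ is a modification in the sense of Proposition~\ref{prop:stale}, so $\ell''$ is marked stale. Moreover, $\ell''$ was generated before $d$, which violates the root-first constraint of Theorem~\ref{thm:dual}(b): the shared context $d$ was unavailable, so $\ell''$ may be incoherent. Regenerating $S(d)$ in a valid order of $H'$ restores coherence, by the same inductive argument as Theorem~\ref{thm:comp}, applied to the generative task with roots first.

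For (iii), every leaf that follows $d$ in the modified order $\sigma'$ has $d$ preceding it. Hence, by the generation step of Theorem~\ref{thm:dual}(b), it has $d$'s content available as shared context. Because $d$ is appended to the stream, the Append-Only Safety property of~\cite{magarshak2026mm} guarantees that $d$ is never lost for later generations.
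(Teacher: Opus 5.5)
Your three-part decomposition is what the paper intends. The paper gives no separate proof and presents the corollary as a consequence of Proposition~\ref{prop:stale} (clauses (i)--(ii)) and the root-first constraint of Theorem~\ref{thm:dual}(b) (clause (iii)). You are also more careful than the paper in noting that the inductive argument of Theorem~\ref{thm:comp} has to be re-run for generation. Two steps, however, need repair.

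\textbf{The insertion claim is false in exactly case (ii).} You claim that inserting $d$ immediately before $\ell$ yields a root-first topological sort of $H'$. Suppose an already-generated leaf $\ell''$ precedes $\ell$ in $\sigma$, and you add $(\ell'',\mathsf{depends\_on},d)$. Then $\ell''$ precedes $d$ in $\sigma'$, which violates the constraint; your own argument for (ii) says as much. The real subtlety is the incoming edges of $d$ from the generated prefix, not its outgoing edges; your assumption that shared dependencies sit above leaves already handles acyclicity. What is true is that $\sigma'$ is valid only for $H'$ with the edges from already-generated consumers removed. A schedule valid for all of $H'$ is: the generated prefix, then $d$, then regeneration of the already-generated members of $S(d)$ in valid order, then the remaining suffix starting with $\ell$. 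Clause (iii) survives, since it needs only that $d$ precede every later leaf.

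\textbf{In (i), Theorem~\ref{thm:byte} does not give what you need.} It certifies $C(\ell')$, which is built from $\alpha(\ell')$ and $N(\ell')$. Coherence of $\ell'$, however, concerns the content of its dependencies. What you need is that no node of $D(\ell')$ is written and no edge incident to one changes. This holds because any consumer of $d$ lying in $D(\ell')$ would give a path $\ell'\rightsquigarrow d$, contradicting $\ell'\notin S(d)$. The only new edges are incident to $d$ and its consumers, and neither lies in $D(\ell')$. Coherence of $\ell'$ then follows from the definition, with no appeal to byte-identity.

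\textbf{State your granularity assumption explicitly.} Clause (i) rests on your modeling choice that $d$ is its own node with per-consumer edges, and that the enclosing shared-dependency stream is not itself written. If dependency is tracked at the stream level, (i) fails. In the paper's CSS example the variable is added to a design-system stream that every page depends on, so Proposition~\ref{prop:stale} would place every page in the reverse closure.
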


This corollary formalizes the website generation pattern: a new CSS variable
discovered during page generation is added to the design system stream, requires
regenerating only affected pages, and is available to all subsequent pages.

\subsection{Context Stability Hierarchy}

The dual-traversal structure maps directly onto the KV-cache stability hierarchy:
\begin{itemize}
\item \textbf{PERMANENT}: root-level architecture, stable for the goal type's lifetime
\item \textbf{SESSION}: section/module context, byte-identical between semantic changes
\item \textbf{COLD}: multi-level summary tree, changes every $\approx10^2$--$10^3$ messages
\item \textbf{DYNAMIC}: wisdom-program-selected context, varying per turn
\end{itemize}
Root-level architecture $\to$ PERMANENT; section context $\to$ SESSION;
leaf requirements $\to$ DYNAMIC.
Shared context at each hierarchy level is computed once and reused across all
descendant leaf generations.

\section{Observations Schema and Write-Time Extraction}
\label{sec:obs}

An \emph{observations schema} maps stream types $T$ to extraction specifications
$\sigma(T,C)$: \emph{instruction clauses} $[\ell_1,\ldots,\ell_k]$ (natural-language
directives constraining LM extraction output, constituting the stable portion of the
system prompt for type $T$), output schema $\Sigma$, attribute namespace $\pi$,
and staleness condition $\delta$.

Extraction uses a two-part call:
the \emph{stable system prompt} (constant across all extractions of type $T$ and
category $C$, KV-cached by Theorem~\ref{thm:byte}) and the
\emph{dynamic user content} (the specific stream's content and existing attributes,
varying per stream).

\begin{proposition}[Extraction Amortization]
With $k_s$ stable system prompt tokens and $\bar{k}_u$ mean dynamic content tokens,
the mean per-extraction prefill cost at 10\% cache pricing is
$\bar{C}=0.1k_s+\bar{k}_u$.
The asymptotic reduction factor relative to no caching is:
$R=(k_s+\bar{k}_u)/(0.1k_s+\bar{k}_u)\xrightarrow{k_s/\bar{k}_u\to\infty}10.$
\end{proposition}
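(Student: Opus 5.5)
The plan is to split each extraction call into its two parts and price them separately, then take the limit. Fix a stream type $T$ and category $C$. By construction of the observations schema, the stable system prompt (the instruction clauses $[\ell_1,\ldots,\ell_k]$ together with $\Sigma$, $\pi$, $\delta$) is a deterministic rendering of $\sigma(T,C)$ alone. As long as $\sigma(T,C)$ is not rewritten, no write to its inputs occurs. The argument of Theorem~\ref{thm:byte} then applies verbatim: a deterministic function of unchanged inputs yields byte-identical output. So every extraction of type $T$ after the first presents the same $k_s$-token prefix, and that prefix is served from the KV cache.

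Next comes the cost accounting, following the steady-state convention of the cost analysis in \S\ref{sec:index}. Cached prefix tokens are billed at $0.1$ of the normal input rate. The one-time $1.25k_s$ cache write is amortized over the cache lifetime and so vanishes per extraction as the number of extractions grows. The dynamic user content of the $i$-th extraction, with $k_{u,i}$ tokens, is billed at full rate. The per-extraction cost is therefore $0.1k_s+k_{u,i}$. Averaging over extractions and using linearity of expectation gives $\bar C=0.1k_s+\bar k_u$. The uncached cost is likewise $k_s+\bar k_u$ on average, which gives the ratio $R=(k_s+\bar k_u)/(0.1k_s+\bar k_u)$.

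For the limit, divide numerator and denominator by $\bar k_u$ and set $\rho=k_s/\bar k_u$. This gives $R=(\rho+1)/(0.1\rho+1)$, which is increasing in $\rho$ and tends to $1/0.1=10$ as $\rho\to\infty$.

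The main obstacle is the first step, not the algebra. It requires that the ``stable'' prompt really is byte-identical across extractions. The cached prefix must precede all dynamic content, which the two-part call structure guarantees. The schema $\sigma(T,C)$ must also not change during the window. I would state that second condition explicitly as a hypothesis, mirroring the hypothesis of Theorem~\ref{thm:byte}. The amortized-write convention should be stated explicitly as well.
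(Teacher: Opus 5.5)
Your proposal is correct and matches the paper's proof: price the uncached call at $k_s+\bar k_u$ and the cached call at $0.1k_s+\bar k_u$, then divide through by $\bar k_u$ and let $k_s/\bar k_u\to\infty$ to get the limit $10$. Your added steps are premises the paper takes for granted from the surrounding text and the cost analysis of \S\ref{sec:index}. These are the byte-identity of the stable prefix via the argument of Theorem~\ref{thm:byte}, the requirement that $\sigma(T,C)$ is unchanged, and the amortized cache-write convention; stating them explicitly as hypotheses makes the argument more complete.
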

\begin{proof}
Without caching, the per-extraction prefill cost is $k_s+\bar{k}_u$.
With 10\% cache-hit pricing on the stable prefix, the effective cost is
$0.1k_s + \bar{k}_u$.
The reduction factor is $R = (k_s+\bar{k}_u)/(0.1k_s+\bar{k}_u)$.
Dividing numerator and denominator by $\bar{k}_u$:
$R = (k_s/\bar{k}_u + 1)/(0.1 k_s/\bar{k}_u + 1) \to 1/0.1 = 10$
as $k_s/\bar{k}_u \to \infty$.
\end{proof}

Extracted attributes, once written, serve context assembly, preprocessing, relevance
scoring, autosuggest, and search indexes---all at zero additional LM cost.

\section{Deterministic Semantic Search}
\label{sec:search}

\subsection{The Precision Problem with Embedding Similarity}

For structured domains, typed attribute queries
(``streams with $\alpha(v)[\mathsf{status}]=\mathtt{unconfigured}$'')
have zero false positives by construction---a precision level unachievable by
embedding similarity, which computes undifferentiated similarity over raw content.

\subsection{Write-Time Indexing and Query Expansion}

Keyword extraction is a first-class observation category: the Groker extracts
normalized keyword set $K(v)$ for each $v$, stored as $\alpha(v)[\mathsf{keywords}]$
and indexed into inverted index $\mathrm{Idx}[k]=\{v\mid k\in K(v)\}$.
Query terms are expanded via: stemming; domain ontology traversal (IS-A, SYNONYM-OF
edges as typed graph relations); thesaurus lookup; and cached prior LM expansions.
Search resolves by set intersection: $\mathrm{score}[v]=|Q\cap K(v)|$.

\begin{theorem}[Synonym Cache Convergence]
\label{thm:cache}
Let $\mathcal{W}$ be the finite domain vocabulary with $|\mathcal{W}| < \infty$.
Assume each query contains at least one term ($|\mathcal{T}_q(n)| \geq 1$ for all $n$).
The LM fallback rate $\rho(n)=|\mathcal{T}_q^{\mathrm{new}}(n)|/|\mathcal{T}_q(n)|\to 0$
as $n\to\infty$.
\end{theorem}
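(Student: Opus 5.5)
The plan is to make precise what ``new'' means and then use a counting argument. Let $\mathcal{T}_q(n)$ be the term set of the $n$-th query and $\mathcal{T}_q^{\mathrm{new}}(n)\subseteq\mathcal{T}_q(n)$ the terms not resolvable by the deterministic expansion pipeline (stemming, ontology, thesaurus, synonym cache) at the time query $n$ arrives. The protocol writes every LM-expanded term into the cache once it has been resolved. So a term $w\in\mathcal{W}$ can be in $\mathcal{T}_q^{\mathrm{new}}(m)$ for at most one index $m$, namely its first occurrence. We also use that cache entries are append-only (Append-Only Safety of~\cite{magarshak2026mm}), so nothing is ever evicted.

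\emph{Step 1 (finite total fallback).} By the observation above,
\[
\sum_{n\geq 1}|\mathcal{T}_q^{\mathrm{new}}(n)| \le |\mathcal{W}| < \infty .
\]
Here I assume all query terms are drawn from $\mathcal{W}$, after normalization to the finite vocabulary.

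\emph{Step 2 (eventually zero).} A convergent series of non-negative integers has only finitely many nonzero terms. Hence there is an $N$ with $|\mathcal{T}_q^{\mathrm{new}}(n)|=0$ for all $n>N$. Since $|\mathcal{T}_q(n)|\geq 1$ by hypothesis, $\rho(n)$ is well defined for every $n$, and $\rho(n)=0$ for $n>N$. Therefore $\rho(n)\to 0$, and in fact the rate is eventually identically zero. As a corollary I would also note the averaged statement: cumulative fallbacks divided by cumulative terms is at most $|\mathcal{W}|/n\to 0$.

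\emph{Main obstacle.} The hard part is not the arithmetic but justifying the modelling assumptions the statement leaves implicit. These are (i) that every query term lies in the finite $\mathcal{W}$, (ii) that a cached expansion is never invalidated, and (iii) that the cache write for a term completes before that term's next occurrence. For (iii), if concurrent queries can both miss on the same term, the per-term miss count is bounded by the concurrency level $c$ instead of $1$. Step 1 then gives $\sum_n|\mathcal{T}_q^{\mathrm{new}}(n)|\le c|\mathcal{W}|$, which is still finite, so the argument survives. For (ii), if ontology edits invalidate entries, I would require finitely many invalidations, or else only claim convergence between invalidation events. I would state these conditions explicitly before the counting argument.
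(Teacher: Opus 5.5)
Your argument is correct and is essentially the paper's proof. The paper notes that $|S_n|$, with $S_n=\bigcup_{i\le n}\mathcal{T}_q(i)\subseteq\mathcal{W}$, is monotone and bounded, so its increments $|\mathcal{T}_q^{\mathrm{new}}(n)|=|S_n\setminus S_{n-1}|$ tend to zero; your first-occurrence bound $\sum_n|\mathcal{T}_q^{\mathrm{new}}(n)|\le|\mathcal{W}|$ is the same fact in telescoped form. You additionally use integrality to get the sharper ``eventually identically zero'' conclusion, and you state explicitly the modelling assumptions (query terms lie in $\mathcal{W}$, cache entries persist) that the paper's proof uses without stating them.
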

\begin{proof}
Let $S_n = \bigcup_{i=1}^n \mathcal{T}_q(i)$ be the set of all vocabulary terms
seen in queries $1$ through $n$.
Since $S_n \subseteq \mathcal{W}$ and $|\mathcal{W}| < \infty$, the sequence
$|S_n|$ is non-decreasing and bounded above by $|\mathcal{W}|$, so it converges.
Therefore $|\mathcal{T}_q^{\mathrm{new}}(n)| = |S_n \setminus S_{n-1}| \to 0$.
Since $|\mathcal{T}_q(n)| \geq 1$ by assumption, the denominator is bounded below,
and $\rho(n) = |\mathcal{T}_q^{\mathrm{new}}(n)|/|\mathcal{T}_q(n)| \leq |\mathcal{T}_q^{\mathrm{new}}(n)| \to 0$.
\end{proof}

Theorem~\ref{thm:cache} mirrors Theorem~\ref{thm:mono}: in both cases, LM cost per
operation converges to zero for finite-domain workloads as the relevant cache
accumulates coverage.

\section{Implementation}
\label{sec:impl}

The architecture is realized across three plugins.
\textbf{Qbix} provides the stream graph substrate: \texttt{Streams::relate()},
\texttt{Streams\_Category} (maintained transactionally), \texttt{Users\_Vote}.
\textbf{Safebox} provides the sandbox (realizing SPACER's $C$/Capabilities and
$E$/Execution components~\cite{magarshak2026mm}): \texttt{Protocol.LLM} with
Anthropic and OpenAI adapters, \texttt{Action.propose} governed write pipeline,
\texttt{Protocol.Transcription}, \texttt{Protocol.Image}.
\textbf{Safebots} provides the Groker enrichment layer, goal stream architecture,
multi-level summary tree, wisdom library management, and preprocessing pipeline.

\textit{KV-cache support}: For Anthropic models, PERMANENT and SESSION blocks
are passed as system content blocks with \texttt{cache\_control:\{type:"ephemeral"\}},
establishing explicit breakpoints.
For OpenAI models, they are merged into a single system message to maximize stable
prefix length for automatic prefix caching.

\section{Discussion}
\label{sec:discussion}

\textbf{Correctness assumptions.}
Theorem~\ref{thm:comp} assumes individually correct Groker invocations.
In practice, Grokers use LMs and are stochastic; the governed write pipeline mitigates
this via schema validation and fitness-based evolution.
``Correct'' means operationally: passes schema validation and maintains fitness
$\geq f_{\min}$ over the evaluation window.

\textbf{Finite vocabulary assumption.}
Theorem~\ref{thm:cache} requires $|\mathcal{W}|<\infty$.
For open-domain deployments the vocabulary is unbounded;
in this case $\rho(n)$ converges to the fraction of novel user terms that have no
ontology expansion, which may remain bounded away from zero indefinitely.
The theorem applies to structured knowledge domains (product catalogs, codebases,
organizational ontologies) where the domain vocabulary is finite and indexable.

\textbf{Acyclic dependency assumption.}
Theorem~\ref{thm:dual} assumes acyclic dependency graphs.
Cyclic dependencies (mutual imports, circular references) require cycle-breaking
heuristics; we leave this extension to future work.

\textbf{Architecture stack.}
The full Magarshak Architecture is:
$\underbrace{\MM}_{\text{substrate}}\to\underbrace{\text{Grokers}}_{\text{comprehension}}\to\underbrace{\text{Context}}_{\text{intelligence}}$~\cite{magarshak2026context}.
This paper covers the comprehension layer; the intelligence layer---proactive
goal-directed agents, organizational efficiency theorems, cross-platform
governance---is treated in~\cite{magarshak2026context}.

\section{Conclusion}
\label{sec:conclusion}

\textsc{Grokers} provides a write-time intelligence architecture with three proved
properties.
The Byte-Identity Theorem establishes near-100\% KV-cache hit rates on deterministically
assembled stable prefixes---fundamentally unachievable by retrieval-based systems.
The Accumulation Monotonicity Theorem establishes that the wisdom library covers an
increasing fraction of interactions without LM calls, with marginal LM cost
non-increasing over time---a property no existing chatbot architecture exhibits.
The Dual-Traversal Ordering Theorem establishes the duality of generation and
comprehension as the unique correct orderings for their respective tasks, composing
into a complete cycle in which comprehension output becomes cached context for
future generation.
These properties constitute a formal basis for write-time intelligence as a strictly
more efficient alternative to query-time retrieval for structured knowledge domains
with recurring interaction patterns.

\section*{Acknowledgements}
The stream graph substrate is implemented in the Qbix open-source framework,
which realizes the Magarshak Machine SPACER substrate~\cite{magarshak2026mm}.

\bibliographystyle{IEEEtran}
\bibliography{references}

\end{document}